\DeclareMathAlphabet{\mathmybb}{U}{bbold}{m}{n}
\newcommand{\1}{\mathmybb{1}}
\DeclareFontFamily{U}{mathx}{}
\DeclareFontShape{U}{mathx}{m}{n}{<-> mathx10}{}
\DeclareSymbolFont{mathx}{U}{mathx}{m}{n}
\DeclareMathAccent{\widehat}{0}{mathx}{"70}
\DeclareMathAccent{\widecheck}{0}{mathx}{"71}
\newtheorem{definition}{Def.}[section]
\newtheorem{example}{Example}[section]
\newtheorem{theorem}{Theorem}[section]
\DeclareMathOperator{\Exp}{Exp}
\begin{document}

\title{\textbf{Simulation of Random {LR} Fuzzy Intervals}}

\author[1]{Maciej Romaniuk\orcidlink{0000-0001-9649-396X}\footnote{Corresponding author}}
\affil[1]{Systems Research Institute PAS, Newelska 6, 01-447 Warszawa, Poland\\
\emph{email:}~\href{mailto:mroman@ibspan.waw.pl}{mroman@ibspan.waw.pl}}

\author[2]{Abbas Parchami\orcidlink{0000-0002-0593-7324}}
\affil{Department of Statistics, Faculty of Mathematics and Computer, Shahid Bahonar University of Kerman, Kerman, Iran\\
\emph{email:}~\href{mailto:parchami@uk.ac.ir}{parchami@uk.ac.ir}}

\author[3,4]{Przemys{\l}aw Grzegorzewski\orcidlink{0000-0002-5191-4123}}
\affil[3]{Faculty of Mathematics and Information Science, Warsaw University of Technology, Koszykowa 75, 00-662 Warszawa, Poland\\
\emph{email:}~\href{mailto:przemyslaw.grzegorzewski@pw.edu.pl}{przemyslaw.grzegorzewski@pw.edu.pl}}
\affil[4]{Systems Research Institute PAS, Newelska 6, 01-447 Warszawa, Poland}

\maketitle

\begin{abstract}
Random fuzzy variables join the modeling of the impreciseness (due to their ``fuzzy part'') and randomness.
Statistical samples of such objects are widely used, and their direct, numerically effective generation is therefore necessary.
Usually, these samples consist of triangular or trapezoidal fuzzy numbers.
In this paper, we describe theoretical results and simulation algorithms for another family of fuzzy numbers -- LR fuzzy numbers with interval-valued cores.
Starting from a simulation perspective on the piecewise linear LR fuzzy numbers with the interval-valued cores, their limiting behavior is then considered.
This leads us to the numerically efficient algorithm for simulating a sample consisting of such fuzzy values.

\textbf{Keywords:} Fuzzy random variable,  Piecewise linear fuzzy number, Simulations, Fuzzy sample.
\end{abstract}


\section{Introduction}
\label{intro}

Synthetically generated samples are essential tools for both real-life applications and more theoretically oriented approaches.
In the literature, many respective generation algorithms were described, including high-dimensional real-valued problems (see, e.g., \cite{10.5555/1051451}) and fuzzy approaches (see, e.g., \cite{Colubi2002,GONZALEZRODRIGUEZ2009642,Grzegorzewski_amcs2022,Parchami20243583}).

In the case of fuzzy numbers (FNs for short), the simulated output is usually a connection of two different viewpoints for data: impreciseness, modeled by fuzziness itself, and uncertainty, related to random phenomena.
In the literature, such objects are described by various notions of fuzzy random values (see, e.g., the ontic view in the Puri and Ralescu sense \cite{Puri} or the epistemic view related to the approach of Kruse and Kwakernaak \cite{Kruse1982,Kwakernaak} among many others \cite{Couso2014}).

Usually, samples of fuzzy random variables consist of triangular or trapezoidal FNs \cite{Grzegorzewski_ijcis2020,Grzegorzewski_amcs2020,Grzegorzewski_amcs2022,Grzegorzewski2024277,Lubiano_IJAR2017}.
Special software packages to generate them are also available \cite{fuzzyResampling,FuzzySimRespack}.
These types of FNs are easy to handle but sometimes too restrictive when plenty of other membership functions are still possible \cite{Grzegorzewski2008,Lubiano_FSS2017}.
Therefore, numerical methods should be introduced for other kinds of fuzzy values (e.g., piecewise FNs \cite{Coroianu2019}).
Some attempts were made in this area \cite{Colubi2002,GONZALEZRODRIGUEZ2009642,Parchami20243583}, but new ideas are still necessary.

In this paper, we continue the work started in \cite{Parchami20243583} and generalize the theoretical and simulation results presented there for the case of LR fuzzy numbers with the interval-valued cores (so-called LR fuzzy intervals, LRFIs for short).
We present LRFIs from the simulation perspective, which enables their simple generation based on the introduced numerical algorithms.
Additionally, some theoretical properties of LRFIs are also discussed.

This paper is organized as follows.
In Sect. \ref{prel}, some notions and definitions concerning fuzzy sets and random variables are recalled.
The simulation approach for the piecewise linear LRFIs is described in Sect. \ref{simapfopietr}.
Theoretical results and the practically oriented algorithm for the limiting behavior of such FNs are considered in Sect. \ref{limibe}.
They allow us to generate the whole statistical sample of these FNs as presented in Sect. \ref{seq:FRS}.
In Sect. \ref{con}, some concluding remarks are provided.

\section{Preliminaries} 
\label{prel} 

This section recalls some necessary definitions and notations concerning fuzzy numbers.
A more exhaustive introduction to this topic can be found in, e.g., \cite{DuboisPrade}. 
Next, some notations related to the probability density function (pdf) and cumulative distribution function (cdf) are also provided.

\begin{definition} 
A \textbf{fuzzy number} (abbreviated further by FN) is an imprecise value characterized by a mapping $\tilde{A}:\mathbb{R}\to [0,1]$, called a membership function, such that its $\alpha$-cut defined by
\begin{equation}
\tilde{A}_{\alpha}=\begin{cases}
\{x\in\mathbb{R}:\tilde{A}(x)\geqslant\alpha\} & \text{if}\quad \alpha\in (0,1], \\
cl\{x\in\mathbb{R}:\tilde{A}(x)>0\} & \text{if}\quad \alpha=0,
\end{cases} \label{eq_acut}
\end{equation}
is a nonempty compact interval for each $\alpha\in [0,1]$, where $cl$ denotes the closure.
\end{definition} 

Easily seen,  every FN is entirely described by both its membership function $\tilde{A}(x)$ and family of $\alpha$-cuts $\{\tilde{A}_{\alpha}\}_{\alpha\in [0,1]}$.
Two of these $\alpha$-cuts have special meaning: the \textbf{core} (given by $\tilde{A}_1=\mathrm{core}(\tilde{A})$),  which contains all values fully compatible with the concept modeled by $\tilde{A}$, and the \textbf{support} (i.e., $\tilde{A}_0=\mathrm{supp}(\tilde{A})$) for which values are compatible to some extent with the concept described by~$\tilde{A}$.
Further on, a family of all fuzzy numbers will be denoted by $\mathbb{F}(\mathbb{R})$.

In the literature, many kinds of membership functions are used for FNs.
A special and widely used type of FNs is known as \textbf{LR fuzzy numbers} (LRFNs) and is given by
\begin{equation} 
\tilde{A}(x)=
\begin{cases}
 0 & \text{if}\quad x < a_1,  \\
 L \left( \frac{x-a_1}{a_2 - a_1}\right) & \text{if}\quad a_1 \leqslant x < a_2 ,  \\
 1 & \text{if}\quad a_2 \leqslant x < a_3 , \\
 R \left( \frac{a_4 - x}{a_4 - a_3}\right) & \text{if}\quad a_3 \leqslant x < a_4 , \\
 0 & \text{if}\quad x \geq a_4,  
\end{cases}
\label{eq:LFfn}
\end{equation} 
where $L, R: [0,1] \rightarrow [0,1]$ are continuous and strictly increasing functions for which $L(0)=R(0)=0, L(1)=R(1)=1$, and $a_1,a_2,a_3,a_4\in\mathbb{R}$ such that $a_1\leqslant a_2\leqslant a_3\leqslant a_4$.
When $L$ and $R$ are linear functions, i.e.,
\begin{equation}
L(x)=\frac{x-a_1}{a_2 - a_1} , \quad R(x)=\frac{a_4 - x}{a_4 - a_3}
\end{equation}
then we get a \textbf{trapezoidal fuzzy number} (abbreviated further on as TPFN).
And if $a_2=a_3$, then we say about a \textbf{triangular fuzzy number} (TRFN).

However, the term LRFN is sometimes recently used only for FNs with a single-element core (i.e., $a_2=a_3$ in \eqref{eq:LFfn}), and the other case (i.e., $a_2\neq a_3$) is known as \textbf{fuzzy intervals}.
In the following, we use the abbreviations LRFN (in the case of the single-element core) and LRFI (for the other case) to distinguish these two types of FNs properly.

FNs are widely used in the literature to model imprecise data (e.g., the results of experiments that can not be precisely described). 
But the nature of some data is also random, so there are approaches that join both the impreciseness and random uncertainty in the single entity -- \textbf{random fuzzy sets}  (or \textbf{fuzzy random variables}).
An example of such an approach is given by the following definition (known as the ontic view, or fuzzy random variables in the Puri and Ralescu sense \cite{Puri}):

\begin{definition}
Let $\mathcal{K}_c(\mathbb{R})$ denote a family of all nonempty convex intervals in~$\mathbb{R}$.
Given a probability space $(\Omega,\mathcal{A},P)$, a mapping $\tilde{X} : \Omega \to\mathbb{F}(\mathbb{R})$ is said to be a \textbf{random fuzzy number} if for all $\alpha\in [0,1]$ the $\alpha$-level set-valued mapping $\tilde{X}_\alpha: \Omega\to\mathcal{K}_c(\mathbb{R})$ is a compact random interval, i.e. $\tilde{X}_\alpha$ is a Borel measurable mapping w.r.t. $\mathcal{A}$ and the Borel $\sigma$-field generated by the topology induced by the Hausdorff metric on $\mathcal{K}_c(\mathbb{R})$.
\end{definition}

There are also other approaches (see, e.g., \cite{Blanco2013,Colubi2001,Gonzalez2006,Hesamian2020,Kratschmer2001}).
Especially the epistemic view, where we consider the fuzzy objects modeling imperfectly perceived real-valued random variables, should be mentioned here \cite{Kruse1982,Kwakernaak}.

Let $F_Z$ be the cdf of some random variable $Z$, where $f_Z$ is its pdf.
Let us assume that $\mathrm{supp}(f_Z)=[0,\xi]$, where $\xi\in\mathbb{R}^+\cup\{+ \infty\}$.
Then for any $y \in (0,\xi)$, we have the truncated pdf $f_{Z|y}$ given by
\begin{equation}
    f_{Z|y}(x)=\frac{f_Z(x)}{F_Z(y)-F_Z(0)}\1 (0<x<y),
    \label{eq:trunc}
\end{equation}
where $\1  (.)$ is the indicator function, and $F_{Z|y}$ is its truncated cdf counterpart.

The empirical distribution function (edf) is defined by
\begin{equation}
    \widehat{F}_{\mathbb{X}}(x)=\frac{1}{n}\sum_{i=1}^n \1 (X_i\leqslant x) ,
\end{equation}
where $\mathbb{X}=(X_1,\ldots,X_n)$ is a sample from the cdf $F_X$.
From the Glivenko-Cantelli lemma, $\widehat{F}_{\mathbb{X}}(x)$ converges to $F(x)$ uniformly over $x\in\mathbb{R}$.
However, in the case of small samples, it may be useful to replace edf with its linear interpolation (iedf) counterpart, given by
\begin{equation}
    \widecheck{F}_{\mathbb{X}}(x) = w\widehat{F}_{\mathbb{X}}(x_{(i)}) + (1+w)\widehat{F}_{\mathbb{X}}(x_{(i+1)}), \quad\text{for}\quad x_{(i)}\leqslant x<x_{(i+1)},
\end{equation}
where $x_{(1)},\ldots,x_{(n)}$ denote order statistics from the sample $\mathbb{X}=(X_1,\ldots,X_n)$ and
\begin{equation}
    w=\frac{x-x_{(i)}}{x_{(i+1)}-x_{(i)}}.
\end{equation}
It can be shown that
\begin{equation}
    \widecheck{F}_{\mathbb{X}}(x) = \begin{cases}
        0  & \text{if}\quad x<x_{(0)},\\
        \frac{i}{n}+\frac{x-x_{(i)}}{n(x_{(i+1)}-x_{(i)})} & \text{if}\quad x_{(i)}\leqslant x<x_{(i+1)}, \; i=0,1,\ldots,n-1, \\
        1 & \text{if}\quad x_{(n)}\leqslant x.
    \end{cases}
\end{equation}

\section{Simulation approach for piecewise LRFIs} 
\label{simapfopietr}

In the following, we apply the approach considered in \cite{Parchami20243583} to generalize the results obtained there to the case of LRFIs.

When the synthetic fuzzy datasets are necessary to conduct some numerical experiments, TPFNs are usually generated using five independent random variables $O, C^l,C^r, S^l,S^r$, where $O$ is a random variable corresponding to the ``true'' population distribution (known as \emph{the original} when the epistemic view is considered), while $C^l,C^r$ create the core of a TPFN, and $S^l,S^r$ are used for modeling its support (see, e.g., \cite{Grzegorzewski_ijcis2020,Grzegorzewski_amcs2020,Grzegorzewski_amcs2022,Grzegorzewski2024277,fuzzyResampling}). 
More precisely, to obtain a TPFN given by a foursome $[a_1,a_2,a_3,a_4]$, we calculate
\begin{equation}
	\begin{split}
    &a_1 = O - S^l - C^l, \quad a_2 = O - C^l,  \\
    &a_3 = O + C^r,  \quad\quad\quad a_3 = O + C^r + S^r ,
	\end{split}
\end{equation}
where the respective random values are generated using some specified pdfs, i.e., $O \sim f_O, C^l \sim f_{C^l},C^r \sim f_{C^r}, S^l \sim f_{S^l},S^r \sim f_{S^r}$.

But this procedure can be easily extended for a $k$-knot piecewise linear fuzzy number \cite{Baez2012,Coroianu2019} with the interval-valued core (piecewise LRFI for short).

To obtain a new piecewise LRFIs $\tilde{X}_k$ (see also Algorithm \ref{alg2}), we start from its ``original'' using a random draw from $f_{O}$.
Next, we add the respective left and right increments of the core (i.e., two random values are generated from $f_{C^l}, f_{C^r}$), and the left and right spreads of its support (i.e., two random values from $f_{S^l}, f_{S^r}$ are used).
In this way, we get
\begin{align}
    \mathrm{core}(\tilde{X}_k) & := [O -C^l, O + C^r], \label{eq:Xcore} \\
    \mathrm{supp}(\tilde{X}_k) & := [O -C^l-S^l,O + C^r+S^r]. \label{eq:Xsupp}
\end{align}

Suppose that $s^l, s^r$ are the realizations of the above-mentioned random values for the spread (i.e., $S^l, S^r$).
Similarly, we have the realizations $o, c^l, c^r$.
During the second step, we generate $k$ knots for the left and right arms of $\tilde{X}_k$ using the respective truncated pdfs $f_{S^l|s^l}$ and $f_{S^r|s^r}$.
To do this, $k$ \emph{iid} random values $l_1,\ldots,l_k$ from $f_{S^l|s^l}$ are drawn and sorted in nondecreasing order.
Then, we have $l_{(1)} \leq \ldots \leq l_{(k)}$ and obtain the following knots for the left arm of $\tilde{X}_k$:
\begin{equation}
    \left( o - c^l-l_{(k-i+1)},\, \frac{i}{k+1} \right), \quad i=1,\ldots,k
    \label{eq:leftknot}
\end{equation}
(compare with \eqref{eq:Xsupp}).
Similarly, $r_1,\ldots,r_{k} $ are \emph{iid} generated  using $f_{S^r|s^r}$, and then  increasingly ordered into $r_{(1)},\ldots,r_{(k)}$. 
They form the knots of the right arm of~$\tilde{X}_k$:
\begin{equation}
	\left( o+c^r+r_{(i)},\,  \frac{i}{k+1} \right), \quad i=1,\ldots,k.
    \label{eq:rightknot}
\end{equation}

During the last step, these knots are piecewise sequentially joined using the linear functions to obtain the membership function of $\tilde{X}_k$. 
Easily seen, the generated piecewise LRFI can be directly characterized by the previously mentioned pdfs and $k$, so we can write 
\begin{equation}
	\tilde{X}_k = \left[f_O, f_{C^l}, f_{C^r}, f_{S^l},f_{S^r} \right]_k .
\label{eq:Xplrfn}
\end{equation}

\begin{algorithm}
\caption{Simulation of a piecewise LRFI}
\label{alg2}
\begin{algorithmic}
	\Require {The number of knots $k \geq 0$, random probability densities $f_{O}, f_{C^r}, f_{C^l}, f_{S^l},f_{S^r}$.}
	\Ensure {The membership function of $\tilde{X}_k$.}
	\State {Generate independently $O \sim f_{O}, C^l \sim f_{I},  C^r\sim f_{I}, S^l \sim f_{S^l}, S^r \sim f_{S^r}$.}
	\State {Generate $l_1, \ldots , l_{k} \stackrel{iid}{\sim} f_{S^l \vert s^l}$}
	\State {Sort increasingly $l_1, \ldots , l_{n_{k}}$ to obtain $l_{(1)}, \ldots , l_{(k)}$.}
	\For {$j=1$ to $k$}
		\State {Calculate the left-hand knots \eqref{eq:leftknot}.}
	\EndFor
	\State {Generate $r_1, \ldots , r_{k} \stackrel{iid}{\sim} f_{S^r \vert s^r}$.}
	\State {Sort increasingly $r_1, \ldots , r_k$ to obtain $r_{(1)}, \ldots , r_{(k)}$.}
	\For {$j=1$ to $k$}
		\State {Calculate the right-hand knots \eqref{eq:rightknot}.}
	\EndFor
	\State {Connect $o- c^l$, the knots \eqref{eq:leftknot}, and $o- c^l - s^l$ using the piecewise lines to obtain the left arm of $\tilde{X}_k$.}
	\State {Connect $o+ c^r$, the knots \eqref{eq:rightknot}, and $o+ c^r + s^r$ using the piecewise lines to obtain the right arm of $\tilde{X}_k$.}
\end{algorithmic}
\end{algorithm}

Now, we illustrate the procedure mentioned above with the following toy example.

\begin{example} 
\label{ExamA1} 
Let us simulate $\tilde{X}_k = \left[ N(1,2), U(0,1), U(0,1), \Exp (3),  \Exp (3) \right]_2$, where $N(\mu,\sigma)$ stands for the normal distribution with the mean $\mu$ and standard deviation $\sigma$, $\Exp (\lambda)$ -- the exponential distribution with the parameter $\lambda$, and $ U(0,1)$ -- the uniform distribution on the unit interval $[0,1]$.

Let us assume that during the first step, we independently generate $O=1.717, C^l = 0.11, C^r = 0.41, S^l = 0.057, S^r = 0.186$ using the respective pdfs (i.e., $O \sim N(1,2), C^l \sim U(0,1)$, etc.).
Then, the core of $\tilde{X}_k$ is equal to $[1.717-0.11,1.717+0.41]=[1.607,2.127]$ and its support to $[1.717-0.11-0.057, 1.717+0.41+0.186]=[1.55,2.313]$.

During the second step, we generate $k = 2$ random values for the left and right-hand knots, using $\Exp (3)$ truncated to the intervals $\left[0, s^l = 0.057 \right]$ and $\left[0, s^r = 0.186 \right]$, respectively.
Let us assume that for the left arm, we obtain $ l_1=0.028 $ and $ l_2=0.017 $, which leads to $ l_{(1)}=0.017 $, $ l_{(2)}=0.028 $.
Then, using the formula \eqref{eq:leftknot}, the respective knots can be calculated as $\left( 1.579, \ 0.333 \right)$, $\left( 1.607 - 0.017 = 1.59, \ 0.667 \right) $.
Similarly, for the right arm, we generate $ r_1=0.052 $ and $ r_2=0.156 $, so $ r_{(1)}=0.052 $ and $ r_{(2)}=0.156 $.
Using the formula~\eqref{eq:rightknot}, the new knots are equal to $\left( 2.283, \ 0.333 \right) , \left( 2.179, \ 0.667 \right)$.

Finally, the respective membership function of $\tilde{X}_k$ is calculated when the obtained knots are connected with the core and the support using the piecewise linear functions (see Fig. \ref{Fig1}).
\end{example}

\begin{figure}[htbp]
\begin{center} 
\includegraphics[width=0.6\textwidth]{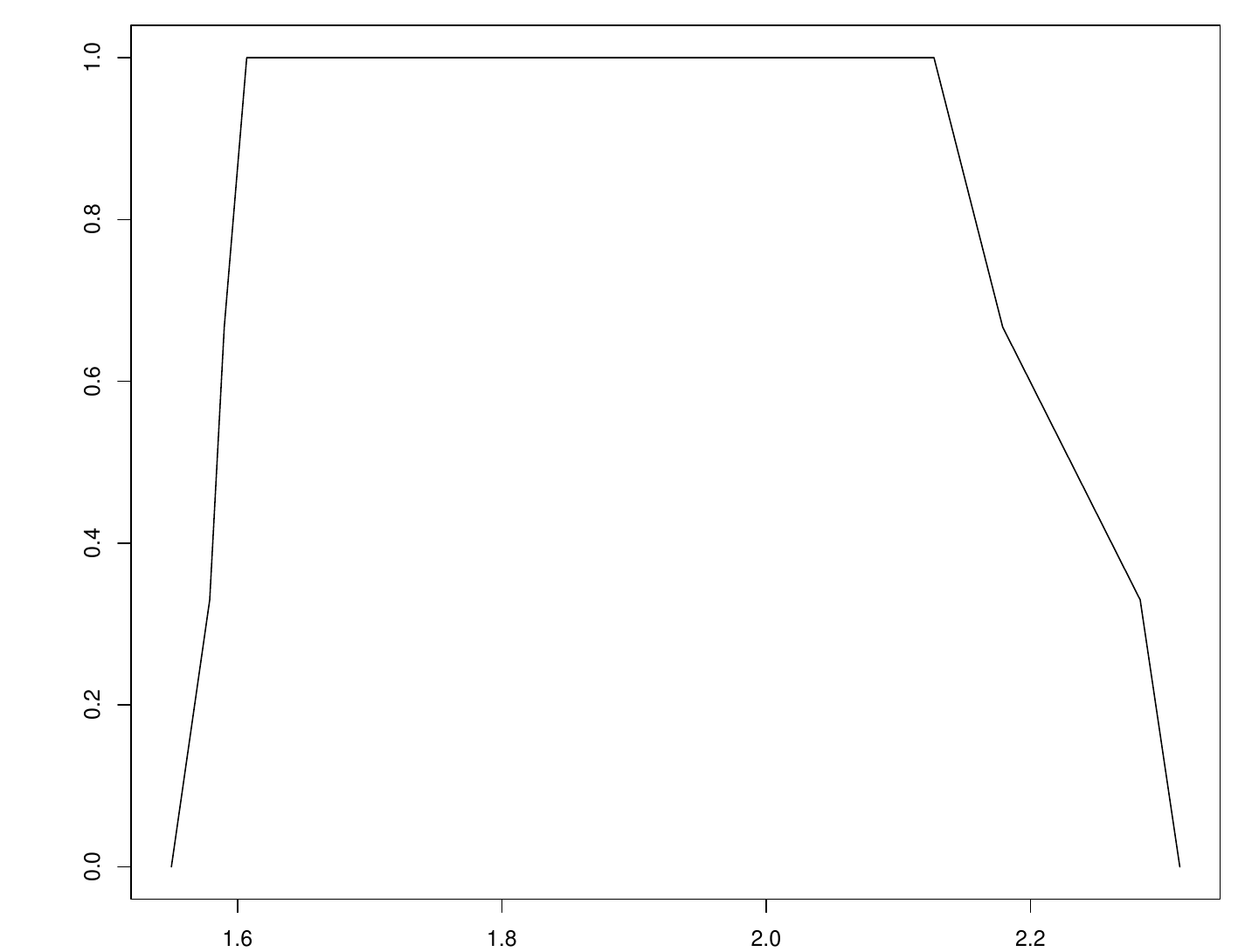}
\caption{The membership function of $\tilde{X}_k$ in Example \ref{ExamA1}.}
\label{Fig1} 
\end{center} 
\end{figure}

\section{Limiting behavior}
\label{limibe}

It is fruitful from the theoretical point of view to observe the behavior of the $k$-knot piecewise LRFI when $k \rightarrow \infty $ (see also \cite{Parchami20243583} for the similar results concerning LRFNs).
Then, let us suppose that we are interested in the generation of $\tilde{X}_k = [N(0,1), U(0,1),U(0,1),U(0,2),\Exp(1)]_k$ for the increasing number of the knots~$k$.
After fixing the core and support, we can observe that the left and right arm converged to the reliability functions (for the truncated uniform and exponential cdf, respectively) when $k \rightarrow \infty$ (see Fig.~\ref{ex2hist}).

\begin{figure}[htbp]
\centering
\subfloat{
  \includegraphics[width=0.49\textwidth]{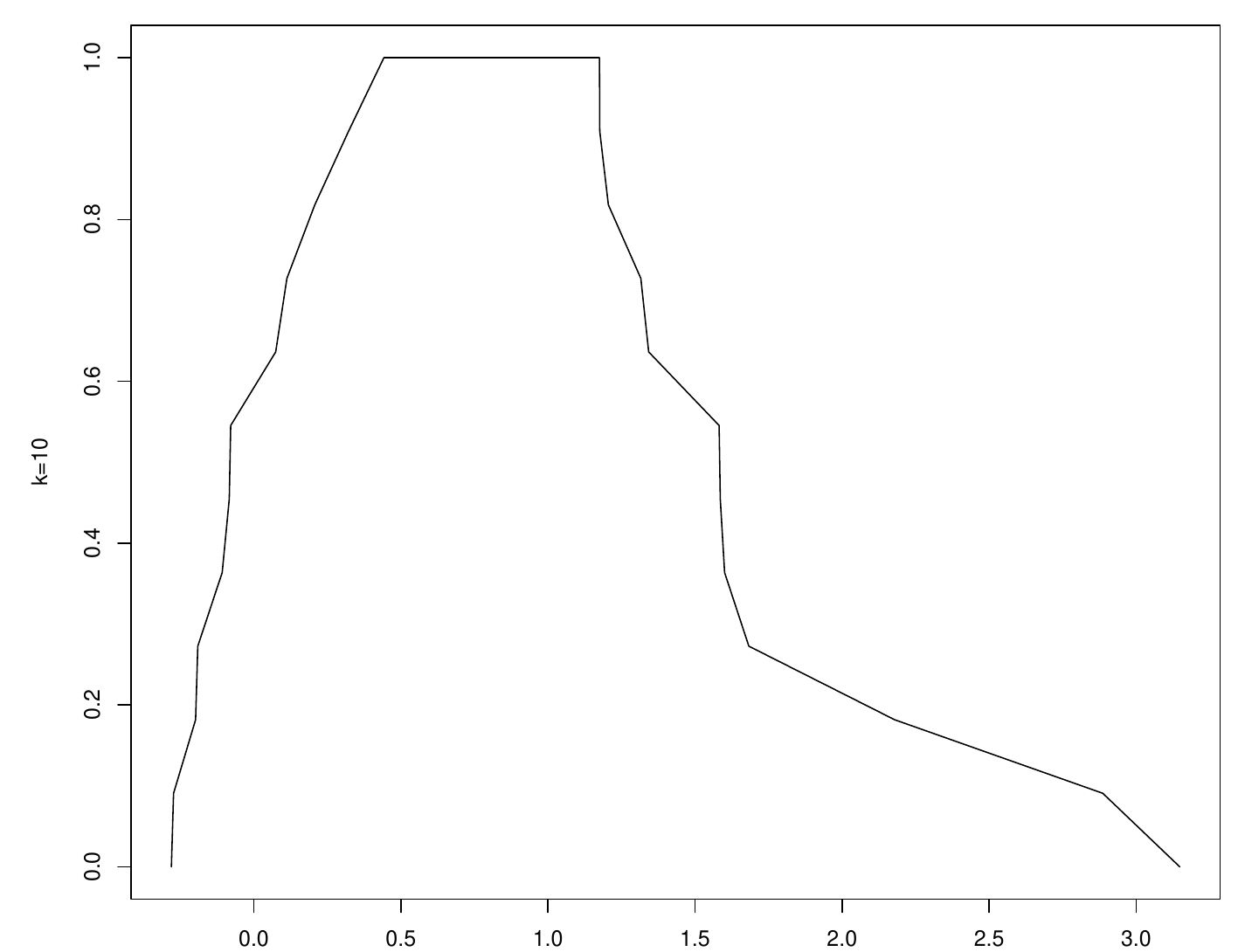}%
}\hfil
\subfloat{%
  \includegraphics[width=0.49\textwidth]{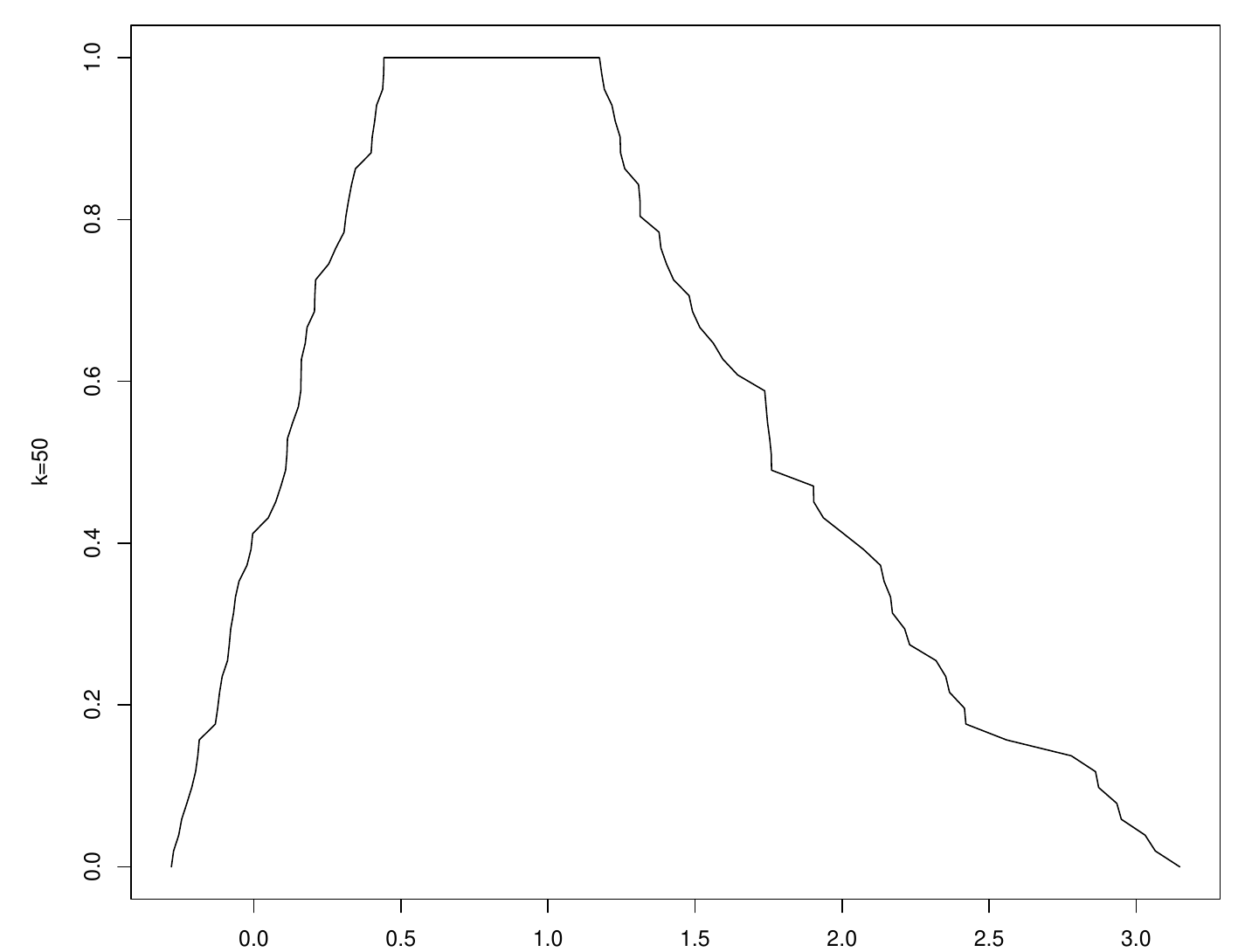}%
}

\subfloat{%
  \includegraphics[width=0.49\textwidth]{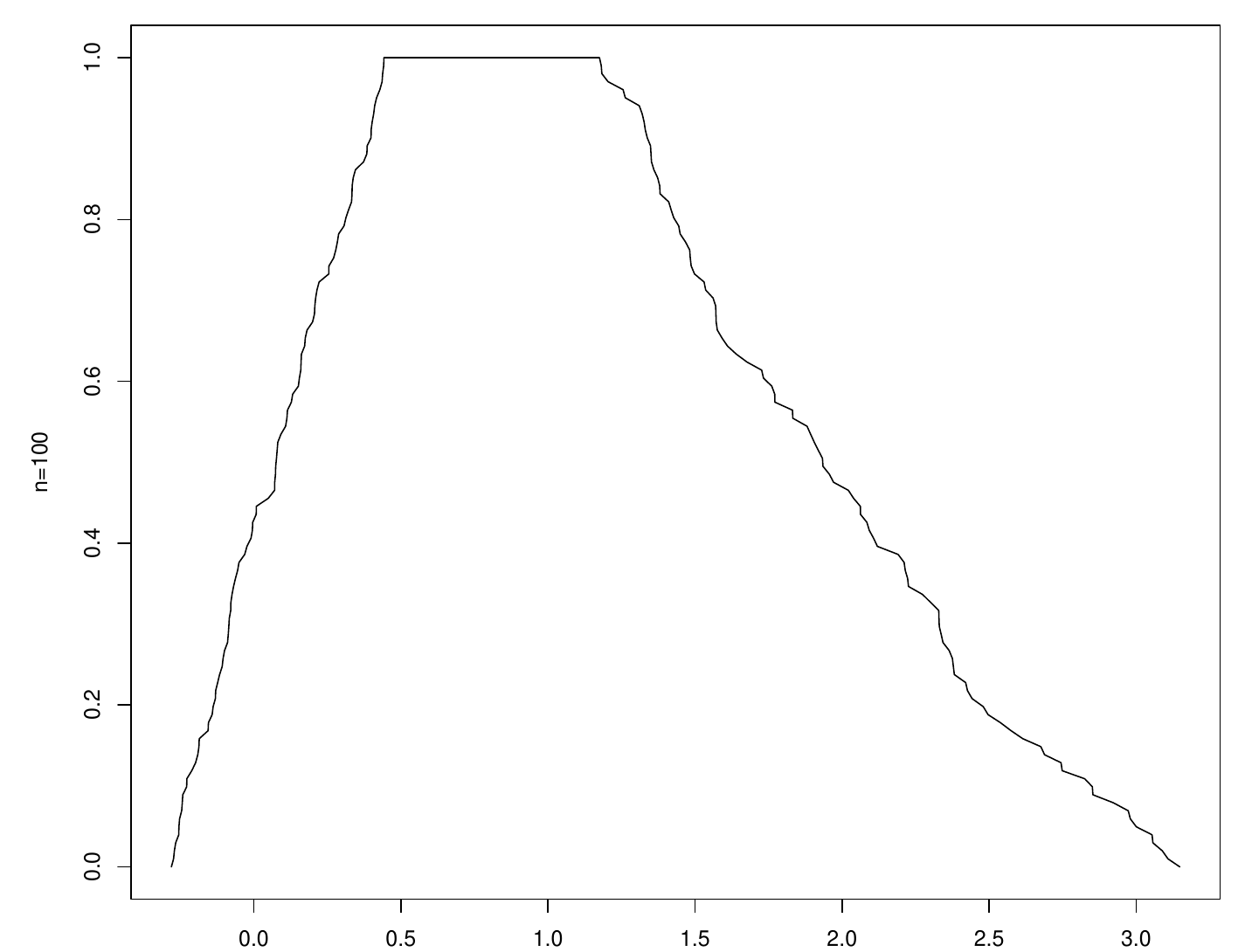}%
}\hfil
\subfloat{%
  \includegraphics[width=0.49\textwidth]{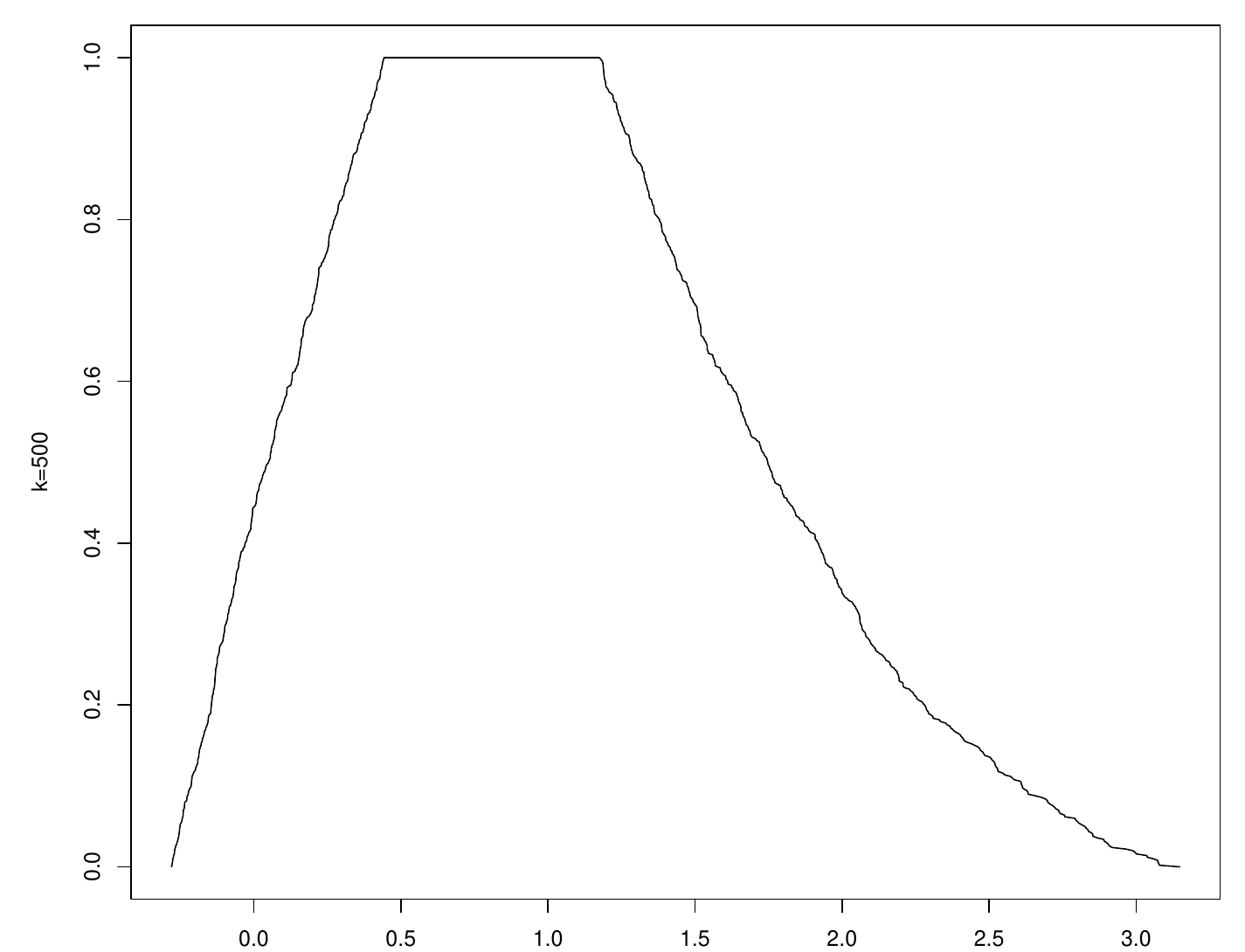}%
}

\caption{Limiting behavior of $\tilde{X}_k = [N(0,1), U(0,1),U(0,1),U(0,2),\Exp(1)]_k$ as a function of $k$.}
\label{ex2hist}

\end{figure}

This exemplary behavior results directly from the following theorem:

\begin{theorem} 
\label{Theorem1}
Let $\tilde{X}_k = \left[f_O, f_{C^l}, f_{C^r}, f_{S^l},f_{S^r} \right]_k$ be a $k$-knot piecewise LRFI.
Then for any fixed $x$, $\tilde{X}$ converges in probability to the random fuzzy number $\tilde{X}(x)$ as $k\to\infty$, where $\tilde{X}(x)$ has the following membership function 
\begin{equation} 
\tilde{X}(x) =  \begin{cases}
0 &  \text{if}\quad x\leqslant O-C^l-S^l, \\ 
1- F_{S^l|s^l}(O-C^l-x) &  \text{if}\quad O-C^l-S^l<x<O-C^l, \\
1 &  \text{if}\quad O-C^l \leq x \leq O+C^r, \\
1- F_{S^r|s^r}(x-(O + C^r)) &  \text{if}\quad O+C^r<x<O+C^r+S^r, \\ 
0 &   \text{if}\quad x\geqslant O+C^r +S^r. 
\end{cases}
\label{eq:Xlim}
\end{equation} 
\end{theorem}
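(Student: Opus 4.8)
The plan is to fix $x$ and an outcome $\omega$ (equivalently, fix the realizations $o, c^l, c^r, s^l, s^r$ of the "outer" random variables $O, C^l, C^r, S^l, S^r$) and show that, conditionally on these, the random value $\tilde X_k(x)$ converges in probability to the right-hand side of \eqref{eq:Xlim} as $k\to\infty$. Since the limiting membership value is a deterministic function of $o,c^l,c^r,s^l,s^r$ and $x$, convergence in probability conditionally on the outer variables, followed by an averaging/dominated-convergence argument over their joint law, yields the unconditional convergence in probability. So the crux is the conditional statement, which is purely about the knots $l_{(1)}\le\dots\le l_{(k)}$ generated from $f_{S^l\mid s^l}$ and $r_{(1)}\le\dots\le r_{(k)}$ from $f_{S^r\mid s^r}$.

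First I would dispose of the trivial regions. If $x\le o-c^l-s^l$ or $x\ge o+c^r+s^r$, then $\tilde X_k(x)=0$ for every $k$ by construction (the support is $[o-c^l-s^l,\,o+c^r+s^r]$), and if $o-c^l\le x\le o+c^r$ then $x$ lies in the core so $\tilde X_k(x)=1$ for every $k$; in all three cases convergence is immediate. The real work is on the left arm $o-c^l-s^l<x<o-c^l$ (the right arm being symmetric). Here the membership function of $\tilde X_k$ is the piecewise-linear interpolant through the points $(o-c^l,1)$, the knots $\bigl(o-c^l-l_{(k-i+1)},\,\tfrac{i}{k+1}\bigr)$ for $i=1,\dots,k$, and $(o-c^l-s^l,0)$. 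Set $t:=o-c^l-x\in(0,s^l)$; then $\tilde X_k(x)$ equals the value at $t$ of the piecewise-linear function that passes through the points $(0,1)$, $\bigl(l_{(k-i+1)},\tfrac{i}{k+1}\bigr)$, $(s^l,0)$ — which is exactly $1$ minus the linear interpolation of the empirical cdf of the sample $l_1,\dots,l_k$, rescaled by $k/(k+1)$, evaluated at $t$. In the notation of the Preliminaries, $\tilde X_k(x) = 1 - \tfrac{k}{k+1}\,\widecheck F_{\mathbb L}(t) + (\text{a negligible correction near the endpoints})$, where $\mathbb L=(l_1,\dots,l_k)$ is an iid sample from $f_{S^l\mid s^l}$.

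Now I would invoke Glivenko–Cantelli for the iid sample $\mathbb L$: $\widehat F_{\mathbb L}\to F_{S^l\mid s^l}$ uniformly, almost surely, hence in probability. Since $\widecheck F_{\mathbb L}$ differs from $\widehat F_{\mathbb L}$ by at most the maximal spacing times the local slope — in fact $\sup_x|\widecheck F_{\mathbb L}(x)-\widehat F_{\mathbb L}(x)|\le \tfrac1n$ from the explicit formula for $\widecheck F_{\mathbb X}$ given in the Preliminaries — the interpolated edf has the same uniform limit. The factor $k/(k+1)\to 1$, and the endpoint corrections vanish because $F_{S^l\mid s^l}$ is continuous with $F_{S^l\mid s^l}(0)=0$ and $F_{S^l\mid s^l}(s^l)=1$. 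Therefore $\tilde X_k(x)\to 1-F_{S^l\mid s^l}(t)=1-F_{S^l\mid s^l}(o-c^l-x)$ in probability, which is the claimed limit on the left arm. The right-arm case is identical after the reflection $x\mapsto x-(o+c^r)$, using that the $r_{(i)}$ are ordered increasingly so that $\tilde X_k$ on the right arm is $1-\tfrac{k}{k+1}\widecheck F_{\mathbb R}$ directly (no reflection of the cdf). Finally, I would assemble the five cases and pass from the conditional convergence to the unconditional one by noting that the set of $x$ for which $x$ equals one of the random boundary points $O-C^l-S^l$, $O-C^l$, $O+C^r$, $O+C^r+S^r$ has probability zero (so for fixed $x$ the outer realization a.s. falls in the interior of one of the five regimes), and bounded convergence over the law of $(O,C^l,C^r,S^l,S^r)$ finishes the argument.

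The main obstacle — really the only non-bookkeeping point — is making precise that the piecewise-linear interpolant through the sorted knots $\bigl(l_{(k-i+1)},\tfrac{i}{k+1}\bigr)$ is, up to the $1/(k+1)$-scaling and an $O(1/k)$ endpoint error, the reflected interpolated edf of the $l_i$'s, and then controlling the three limits ($k/(k+1)\to1$, $\widecheck F\to\widehat F$ uniformly, Glivenko–Cantelli) simultaneously and uniformly in $x$ so that the pointwise-in-$x$ conclusion is legitimate; once that bridge is in place the rest is routine.
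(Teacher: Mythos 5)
Your proposal is correct and follows essentially the same route as the paper's proof: both express the arms of $\tilde{X}_k$ through the (interpolated) empirical cdf of the knot samples $l_1,\ldots,l_k$ and $r_1,\ldots,r_k$ and pass to the limit via the law of large numbers / Glivenko--Cantelli. Your version is in fact somewhat more careful than the paper's, which identifies the piecewise-linear arm with $1-\widecheck{F}$ exactly, whereas you correctly note the $O(1/k)$ discrepancy coming from the $i/(k+1)$ knot heights and show it is harmless.
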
 

\begin{proof}
For the fixed number of the knots $k$, the membership function of $\tilde{X}_k$ is given by
\begin{equation} 
\tilde{X}_k(x) = 
\begin{cases}
0 & \text{if}\quad  x \leqslant O-C^l-S^l \\ 
1-\widecheck{F}_{l_1,\ldots,l_k}(O-C^l-x) &  \text{if}\quad  O-C^l-S^l<x<O-C^l \\ 
1 &  \text{if}\quad O-C^l \leq x \leq O+C^r \\ 
1- \widecheck{F}_{r_1,\ldots,r_k}(x-(O + C^r)) &  \text{if}\quad  O+C^r<x<O+C^r+S^r, \\ 
0 &  \text{if}\quad x \geqslant O+C^r +S^r,\\ 
\end{cases} 
\label{eq:X2} 
\end{equation}
where $\widecheck{F}_{l_1,\ldots,l_k}$ is the iedf based on the sample $(l_1,\ldots,l_k)$, and $\widecheck{F}_{r_1,\ldots,r_k}$ its counterpart for $(r_1,\ldots,r_k)$ (see Algorithm \ref{alg2} for the respective notation).
We know that the iedf $\widecheck{F}_{\mathbb{X}}(x)$ converges in probability to $F(x)$ for any fixed $x$.
For the considered $\tilde{X}_k$, we have the iedf $\widecheck{F}_{l_1,\ldots,l_k}$ for $(l_1,\ldots,l_k)$, and iedf $\widecheck{F}_{r_1,\ldots,r_k}$ for $(r_1,\ldots,r_k)$, respectively.
Then, from the Theorem of Large Numbers, we get $\lim_{k\to\infty} \widecheck{F}_{l_1,\ldots,l_k}(t) = F_{S^l|s^l}(t)$ and $\lim_{k\to\infty} \widecheck{F}_{r_1,\ldots,r_k}(t) = F_{S^r|s^r}(t)$ for any fixed $t$ with probability one.
Therefore, as stated in the theorem, $\tilde{X}_k(x)$ converges in probability to the random fuzzy number $\tilde{X}(x)$ for any fixed $x$ when $k\to\infty$.
\end{proof} 

The ``limit'' random fuzzy number $\tilde{X}$ from Theorem \ref{Theorem1} can be briefly described by the respective pdfs, so we get
\begin{equation}
    \tilde{X}=(O, C^r, C^l,S^l,S^r) .
    \label{eq:XlimLR}
\end{equation}
Its membership function \eqref{eq:Xlim} fulfills the requirements of \eqref{eq:LFfn} but is also related to the random values \eqref{eq:XlimLR}.
Therefore, we obtain a new object -- a \textbf{random LR fuzzy interval} (random LRFI).
Theorem \ref{Theorem1} also leads to the simulation procedure for such $\tilde{X}$ (see Algorithm \ref{alg10}).

\begin{algorithm}
\caption{Simulation of random LRFI}
\label{alg10}
\begin{algorithmic}
	\Require {Probability density functions $f_O,  f_{C^l},  f_{C^r}, f_{S^l}, f_{S^r}$.}
	\Ensure {The membership function of $\tilde{x}$.}
	\State {Generate independently $O\sim f_O, C^l \sim  f_{C^l}, C^r \sim  f_{C^r},S^l\sim f_{S^l}, S^r \sim f_{S^r}$.}
	\State {Calculate the truncated cdfs $F_{S^l|s^l}$ and  $F_{S^r|s^r}$ based on the truncated densities  $f_{S^l|s^l}$ and  $f_{S^r|s^r}$.}
	\State {Calculate the shape functions $L(x)= 1- F_{S^l|s^l}(o-c^l-x)$ for $o-c^l-c^l<x<o-c^l$, and $R(x)= 1- F_{S^r|s^r}(x-(o + c^r))$ for $o+c^r<x<o+c^r+s^r$.} 
\end{algorithmic}
\end{algorithm}

\section{Simulation of a fuzzy random sample}
\label{seq:FRS}

As mentioned previously, fuzzy random values are essential in many scientific applications.
Therefore, a simple and theoretically justified way for their generation is necessary.
Based on Theorem \ref{Theorem1} and Algorithm \ref{alg10}, we can simulate the whole \emph{iid} sample of random LRFIs (or simply -- the fuzzy random sample).
Therefore, Def. \ref{Def.Rand.Sample} leads directly to the respective procedure given by Algorithm \ref{alg3}.

\begin{definition} 
\label{Def.Rand.Sample} 
We say that random LRFIs $\tilde{X}_1,\ldots,\tilde{X}_n$ are \emph{iid} (or alternatively, form a simple fuzzy random sample) for the pdfs $f_O, f_{C^l},f_{C^r},f_{S^l},f_{S^r}$, if for each $i=1,\ldots,n$, $\tilde{X}_i=(O, C^r, C^l,S^l,S^r)$, and the respective random variables $O_1,\ldots,O_n$ are \emph{iid} generated using $f_O$, $C^l_1,\ldots,C^l_n$ are \emph{iid} generated using $f_{C^l}$ etc., and all these samples $O_1,\ldots,O_n, C^l_1,\ldots,C^l_n,\ldots, S^r_1, \ldots ,S^r_n$ are mutually independent.
\end{definition}

\begin{algorithm}
\caption{Simulation of the simple fuzzy random sample}
\label{alg3}
\begin{algorithmic}
	\Require {The sample size $n$, probability density functions $f_O, f_{C^l},f_{C^r},f_{S^l},f_{S^r}$.}
	\Ensure {The membership functions of $\tilde{x}_1,\ldots,\tilde{x}_n$.}
	\For{$i=1$ to $n$}
		\State {Use Algorithm \ref{alg10} to generate $\tilde{x}_i=(o_i,c^l_i, c^r_i ,s^l_i,s^r_i)$.}
		\State {Add this $\tilde{x}_i$ to the sample $\widetilde{\mathbf{x}}$.} 
	\EndFor
	
\end{algorithmic}
\end{algorithm} 

Example of the sample for $\tilde{X} = [N(0,1), U(0,1),U(0,1),U(0,2),\Exp(1)]$ generated using Algorithm \ref{alg3} can be found in Fig. \ref{Fig3}.

\begin{figure}[htbp]
\begin{center} 
\includegraphics[width=0.6\textwidth]{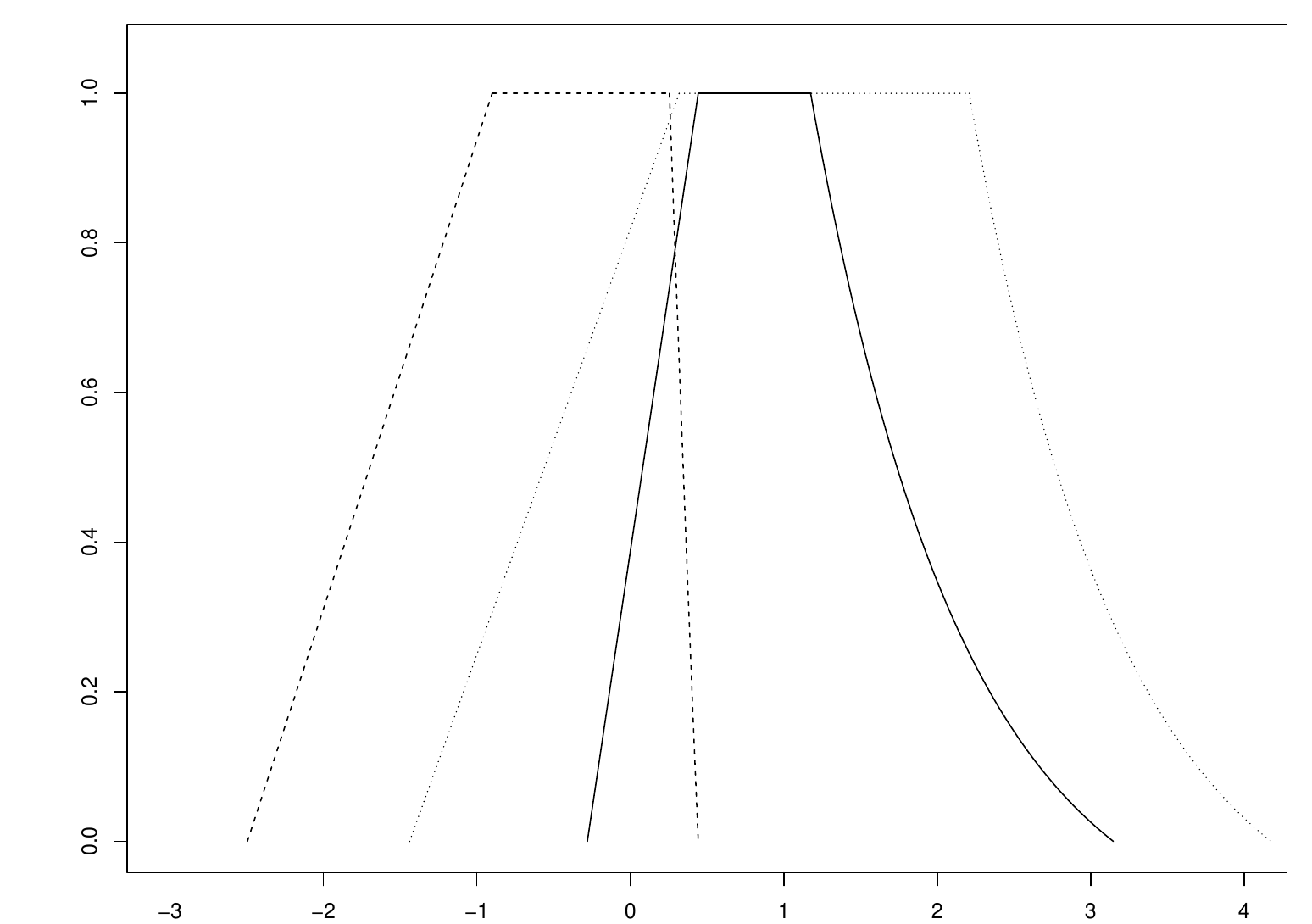}
\caption{Exemplary sample with $n=3$ elements generated for $\tilde{X} = [N(0,1), U(0,1),U(0,1),U(0,2),\Exp(1)]$.}
\label{Fig3} 
\end{center} 
\end{figure} 

\section{Conclusion}
\label{con}

In this paper, we propose the generalization of the approach presented in \cite{Parchami20243583} to the case of the LR fuzzy numbers with interval-valued cores.
The theoretical and numerical results are presented for such values, enabling their direct simulations.
These findings significantly broaden the family of the membership functions of FNs that can be used to generate synthetic statistical samples.
Until now, triangular or trapezoidal FNs have been used as models for simulation purposes.
Of course, these types of FNs are essential, but limiting yourself only to them may cause unexpected problems \cite{Grzegorzewski2008}.

Other approaches for different kinds of membership functions and types of fuzzy sets should be further investigated.
For example, the generation of samples of intuitionistic fuzzy sets may also be profitable in the case of some real-life applications.

\bibliographystyle{abbrv}
\bibliography{paper_IWIFSGN_24_bib1}

\end{document}